\newtheorem{thm}{Theorem}
\newtheorem{prop}{Proposition}
\newtheorem{defn}{Definition}
\journal{Bio Systems}
\begin{document}

\begin{frontmatter}

\title{Theoretical Formulation and Analysis of the Deterministic Dendritic Cell Algorithm}
\author[label1]{Feng Gu}
\author[label2]{Julie Greensmith and Uwe Aickelin}
\address[label1]{School of Computing, University of Leeds, LS2 9JT, UK}
\address[label2]{School of Computer Science, University of Nottingham, NG8 1BB, UK}

\begin{abstract}
As one of the emerging algorithms in the field of Artificial Immune Systems (AIS), the Dendritic Cell Algorithm (DCA) has been successfully applied to a number of challenging real-world problems. However, one criticism is the lack of a formal definition, which could result in ambiguity for understanding the algorithm. Moreover, previous investigations have mainly focused on its empirical aspects. Therefore, it is necessary to provide a formal definition of the algorithm, as well as to perform runtime analyses to reveal its theoretical aspects. In this paper, we define the deterministic version of the DCA, named the dDCA, using set theory and mathematical functions. Runtime analyses of the standard algorithm and the one with additional segmentation are performed. Our analysis suggests that the standard dDCA has a runtime complexity of $\mathcal{O}(n^{2})$ for the worst-case scenario, where $n$ is the number of input data instances. The introduction of segmentation changes the algorithm's worst case runtime complexity to $\mathcal{O}(\max(nN,nz))$, for DC population size $N$ with size of each segment $z$. Finally, two runtime variables of the algorithm are formulated based on the input data, to understand its runtime behaviour as guidelines for further development. 
\end{abstract}

\begin{keyword}
Artificial Immune Systems, Dendritic Cell Algorithm, Runtime Analysis, Formulation and Formalisation 
%% keywords here, in the form: keyword \sep keyword

%% MSC codes here, in the form: \MSC code \sep code
%% or \MSC[2008] code \sep code (2000 is the default)

\end{keyword}

\end{frontmatter}

\section{Introduction}
Artificial Immune Systems (AIS)~\cite{aisci2002,timmis2008a} are computer systems inspired by both theoretical immunology and observed immune functions, principles and models, which are applied to real-world problems. The human immune system from which AIS draw inspiration, is evolved to protect the host from a wealth of invading micro-organisms. AIS are developed to provide similar defensive properties within a computing context. Initially, AIS were based on simple models of the human immune system. As noted by Stibor {\it et al.}~\cite{stibor2005}, `first generation' immune algorithms, such as negative selection and clonal selection, do not produce the same high-quality performance as the human immune system. These algorithms, negative selection in particular, are prone to problems with scalability and the generation of excessive false alarms, when used to solve problems such as network-based intrusion detection. Recent AIS use more rigorous and up-to-date immunology and are developed in collaboration with modellers and immunologists. The resulting algorithms are believed to encapsulate the desirable properties of immune systems, including robustness, error tolerance, and self-organisation~\cite{aisci2002}.

One such `second generation'  immune algorithms is the Dendritic Cell Algorithm (DCA) \cite{greensmith2007}. The algorithm is inspired by functions of the dendritic cells (DCs) of the innate immune system, while incorporating principles of a key novel theory in immunology, named the {\it danger theory}~\cite{manfredlutz2002}. An abstract model of natural DC behaviour is used as the foundation of the developed algorithm. The DCA has been successfully applied to numerous security-related problems, including port scan detection~\cite{greensmith2007}, botnet detection~\cite{aihammadi2008} and as a classifier for robot security~\cite{oates2007}. These applications refer to the area of anomaly detection, which is essentially one particular type of binary classification with an `anomalous' class and a `normal' class. According to results of these applications, the DCA has shown not only good performance in terms of detection rate, but also the ability to reduce the rate of false alarms in comparison to other systems, such as Self Organising Maps (SOM)~\cite{greensmith2008c}. 

However, there are also issues concerning the DCA. One criticism is the lack of a formal definition, which could result in ambiguity for understanding the algorithm and thus lead to incorrect applications and implementations. It is pointed out in~\cite{stibor2009} that the DCA shares similarities to linear classifiers since it employs a linear discriminant function for signal transformation. However, the DCA is not simply a collection of linear classifiers, as it performs classification based on the temporal correlation of a multi-agent DC population, rather than linear signal transformation. Signal transformation is used to identify if any anomalies occurred in the past. Whether the identified anomalies can be correctly correlated with potential causes is determined by the effectiveness of the temporal correlation performed at the population level. As a first step, a formal definition should be provided for presenting the algorithm in a clear and accessible manner. 

Previous investigations have mainly focused on its empirical aspects, evidenced by experimental results on a range of problem domains. Except for the geometry analysis of Stibor {\it et al.}~\cite{stibor2009} that was later extended in Oates's thesis~\cite{oate2010a}, theoretical analysis of the DCA has barely been performed, and most theoretical aspects of the algorithm have not yet been revealed. Other immune inspired algorithms, such as negative and clonal selection algorithms, were theoretically presented in~\cite{timmis2008}. Elberfeld and Textor~\cite{elberfeld2011} theoretically analysed string-based negative selection algorithms, to show the possibility of reducing the worst-case runtime complexity from exponential to polynomial, through compressing detectors. More recently, the work of Zarges~\cite{zarges2008,zarges2009} theoretically analysed one of the vital components of the clonal selection based algorithms, namely inversely proportional mutation rates. Jansen and Zarges~\cite{janson2011} performed a theoretical analysis of immune inspired somatic contiguous hypermutations for function optimisation. As a result, it is important to conduct a similar theoretical analysis of the DCA, to determine its runtime complexity and numerous other algorithmic properties, in line with other AIS. 

In this paper, we extend the work presented in~\cite{gu2009b}, which involved formal specifications of a single-cell model at the behavioural level using interval temporal logic~\cite{moszkowski1985}. Note the algorithm demonstrated in this work is the deterministic DCA (dDCA)~\cite{greensmith2008b}, created by removing stochastic components for the ease of analysis. Any statements of the DCA made subsequently are referred to the dDCA. The aim is to provide a clear and accessible definition of the DCA, as well as an initial theoretical analysis on the algorithm's runtime complexity and other algorithmic properties. As potential readers may not have a deep understanding of complicated formal methods such as the B-method~\cite{jeanraymond1996}, we use set theory and mathematical functions to specify the algorithm. From the formal definitions, theoretical analyses on the runtime complexity are performed, for the standard algorithm and an extended system with segmentation. Moreover, the formulations of two important runtime variables are included to present the algorithm's runtime behaviour, and to provide guidelines for future development. The paper is organised as follows, an overview of the DCA is given in Section 2, the formal definition is presented in Section 3, runtime analyses are shown in Section 4, formulation of two runtime variables is described in Section 5, and finally conclusions and future work are presented in Section 6. 

\section{The Dendritic Cell Algorithm}
\subsection{Biological Background}
The DCA is inspired by functions of the dendritic cells (DCs) of the innate immune system, which forms part of the body's first line of defence against invaders. DCs exhibit the ability to combine a multitude of molecular information and to interpret this information for the T-cells of the adaptive immune system. This could lead to the induction of various immune responses against perceived pathogenic threats. Therefore, DCs are often seen as detectors responsible for policing different tissues, as well as inductive mediators for a variety of immune responses. 

In general, two types of molecular information are processed by DCs, namely `signal' and `antigen'. Signals are collected by DCs from their local environment and consist of indicators of the health status of the monitored tissue. Throughout its lifespan, an individual DC will exist in one of three states, namely `immature', `semi-mature' and fully `mature', as shown in Figure~\ref{fig:dc_states}. In the initial immature state, DCs are exposed to a combination of signals, and perform phagocytosis to ingest substances from their surroundings. Based on the concentration of presented signals, DCs differentiate into either a `fully mature' form to activate the adaptive immune system, or a `semi-mature' form to suppress it. If a DC is exposed to a combination of signals generated from a healthy or steady state tissue environment, such as no occurrence of tissue damage, it more likely becomes a semi-mature DC. Conversely, if a DC is presented with a combination of signals generated from a damaged tissue environment, such as the presence of unregulated cell death, it more likely differentiates into a fully mature DC. Natural DCs bind to and process many cytokine signals. In an abstract model of DC behaviour developed by Greensmith~\cite{greensmith2007}, the following categories are defined. 
\begin{itemize}
\item {\bf PAMP}: Pathogenetic Associated Molecular Patterns, molecular signatures of pathogens which are recognised by Toll-Like Receptors (TLRs) on the surface of DCs, and they are highly influential to the transition from immature state to fully mature state;
\item {\bf Danger}: released by damaged tissue cells subject to necrosis (unregulated cell death), they have a lower effect than PAMPs on the maturation towards fully mature state; 
\item {\bf Safe} signals are derived from the cells that encounter apoptosis (programmed cells death), TNF-$\alpha$ (Tumour Necrosis Factor) is one candidate of safe signals, they contribute to the maturation from immature state to semi-mature state; 
\end{itemize}

During the immature state, DCs also collect debris in the tissues which are subsequently combined with the environmental signals. Some of the `suspicious' debris collected are known as antigens, and they are proteins originating from potential invading entities. DCs combine the `suspect' antigens with evidence in the form of signals to correctly instruct the adaptive immune system to respond, or become tolerant to the presented antigens. For more detailed information regarding the underlying biological mechanisms, please refer to~\cite{greensmith2007, manfredlutz2002}. 
\begin{figure} 
\begin{center}
\includegraphics[width=\textwidth]{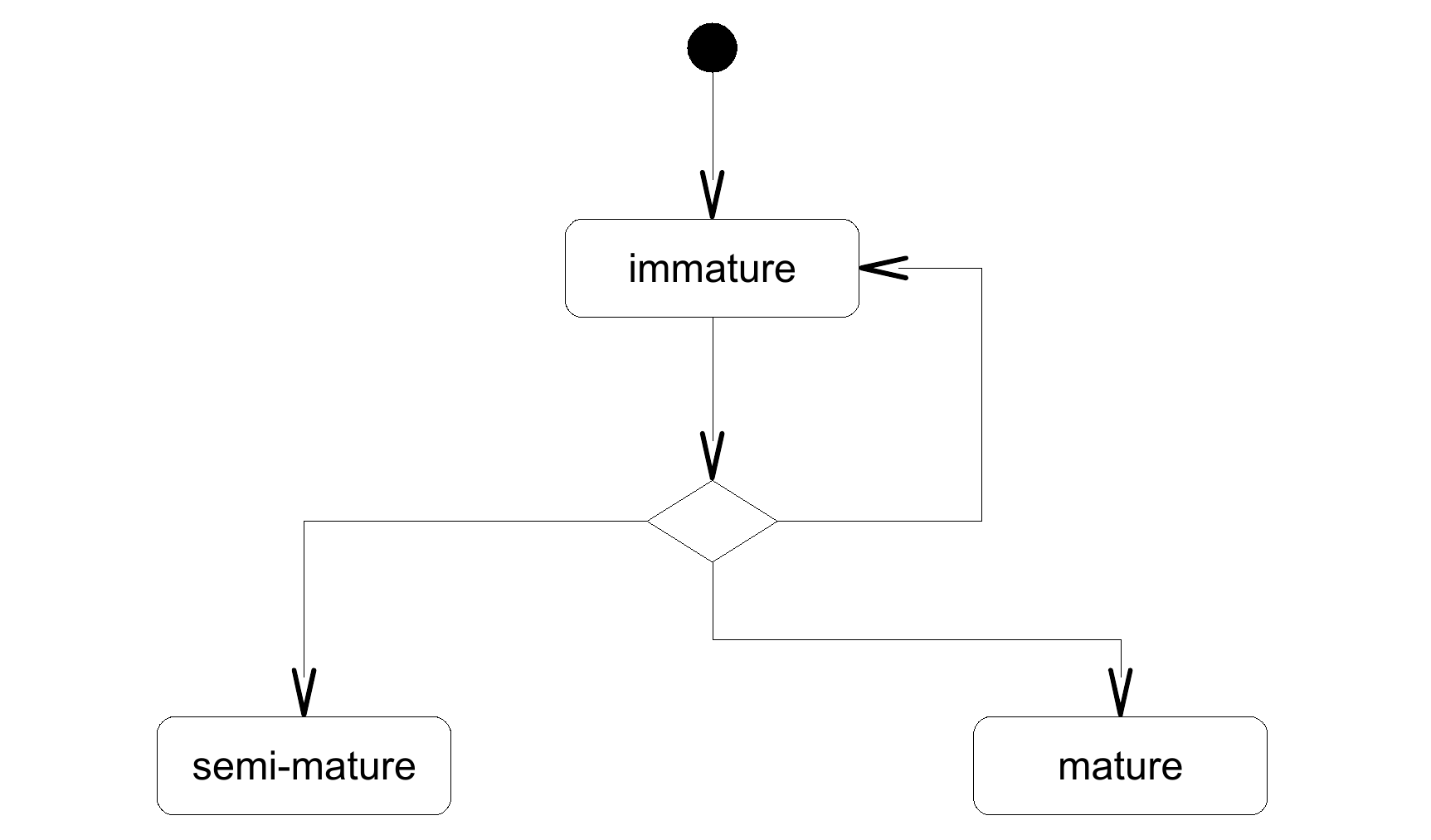}
\end{center}
\caption{A state-chart describing the three states of an individual DC.}
\label{fig:dc_states}
\end{figure}

\subsection{Algorithmic Details}
The DCA was designed and developed based on an abstract DC model created by Greensmith~\cite{greensmith2007}. It incorporates the functionality of DCs including data fusion, state differentiation and causal correlation. As per the natural system, there are two types of input data, namely `antigen' and `signal'. It is generally assumed that certain causal relationship exists between the two data streams. Antigens are categorical values that can be various states of a problem domain or the entities of interest associated with a monitored system. Signals are represented as vectors of real-valued numbers, and they are measures of a monitored system's status within certain time periods. In real-world applications, antigens represent what is to be classified within a given problem domain. For instance, they can be process IDs in computer security problems~\cite{aihammadi2008, greensmith2007a}, a small range of positions and orientations of robots~\cite{oates2007}, the proximity sensors of online robotic systems~\cite{mokhtar2009}, or the time stamps of records collected in biometric data~\cite{gu2009c}. Signals represent system context of a host or a measure of network traffic~\cite{aihammadi2008, greensmith2007a}, the readings of various sensors in robotic systems~\cite{oates2007, mokhtar2009}, or the biometric data captured from a monitored automobile driver~\cite{gu2009c}. Signals are normally pre-categorised as `PAMP',`Danger' or `Safe'. The semantics of these signal categories is listed as follows: 
\begin{itemize}
\item \textbf{PAMP}: increases in value as the observation of anomalous behaviour, it is a confidence indicator of anomaly, which usually is presented as signatures of the events that can definitely cause damage to the system; 
\item \textbf{Danger}: reflects to potential anomalies, as the values increases, the confidence of the abnormal status of the monitored system increases accordingly; 
\item \textbf{Safe}: increases in value in conjunction with observed normal behaviour, this is a confidence indicator of normal, predictable or steady-state system behaviour. 
\end{itemize}
Increases in the value of safe signal suppress the effect of the PAMP and Danger signals within the algorithm, as per what is observed in the natural system. This immunological property is incorporated within the DCA in the form of predefined weights for each signal category, for the transformation from input signals to output signals, which are `$CSM$' and `$K$' signals. The $CSM$ signal reflects the amount of information a DC has processed, i.e. when to make decisions, while the $K$ signal is a measure indicating the polarisation towards anomaly or normality, i.e. how to make decisions. The output signals are used to evaluate the status of the system monitored by the analysis component of the algorithm. Such a signal transformation process is displayed in Figure~\ref{fig:dca_sigs}.
\begin{figure}[t]
\begin{center}
\includegraphics[width=0.8\textwidth]{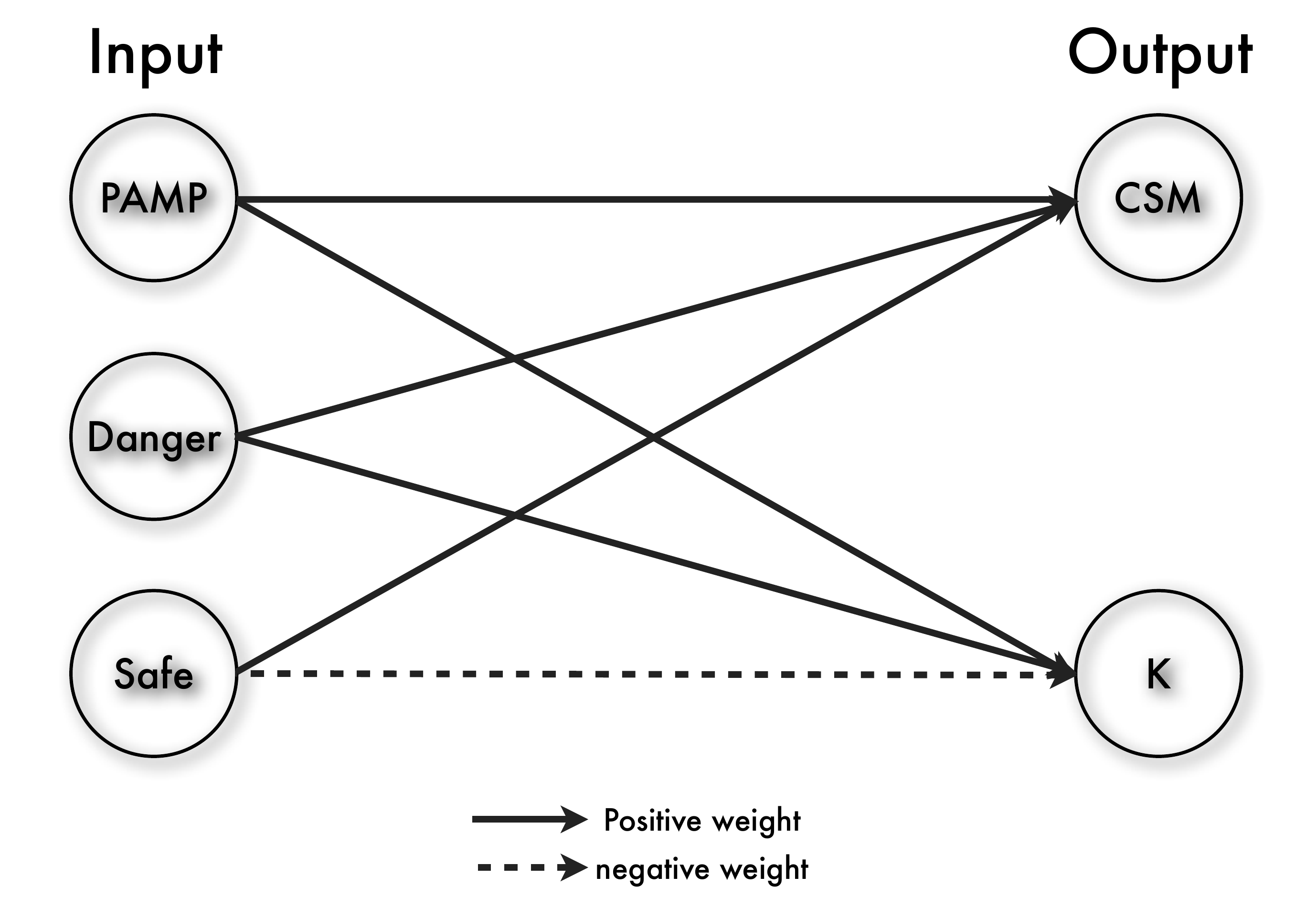}
\end{center}
\caption{An illustration of the signal transformation process of the DCA.}
\label{fig:dca_sigs}
\end{figure}  

In order to achieve its detection ability, the DCA initialises a population of artificial DCs operating in parallel as detectors. Each DC is given a distinct limit of its lifespan, which creates a dynamic time window effect in the population~\cite{oates2008a}. This leads to the same signal and antigen data streams being processed by every DC, during different time periods across the analysed time series. A temporal correlation between signals and antigens is also performed by each DC internally, to capture the causal relationship within the data. As suggested in~\cite{greensmith2008b}, to perform correct correlation, the signals are supposed to appear after the antigens, and the delay should be shorter than the time window created by each DC. 

During detection, each individual DC updates its antigen profile by storing the sampled antigens internally. In the meantime, the output signals produced by the signal transformation are accumulated, to update the DC's lifespan and signal profile. The DC's lifespan is subtracted by the cumulative {\it CSM}, which gives the difference between the amount of information initially allowed for a DC and that has been processed by the DC so far. Such difference reflects to if the DC has processed sufficient information and is ready to make decisions. On the other hand, its signal profile is added by the cumulative {\it K}, to aggregate the polarisation towards anomaly or normality indicated by its tendency toward $-\infty$ or $+\infty$. As soon as the DC's lifespan reaches zero, it stops performing signal transformation and temporal correlation. The association between the cumulative {\it K} and sampled antigens within the DC, termed `processed information', is then accumulated by the analysis phase to produce the final detection results. Once a matured DC has presented the processed information, it is reset to its default form. Here, the population size is generally kept constant, but can be user specified. The entire process of different steps of the DCA is illustrated in Figure~\ref{fig:dca_steps}. 
\begin{figure}[t]
\begin{center}
\includegraphics[width=\textwidth]{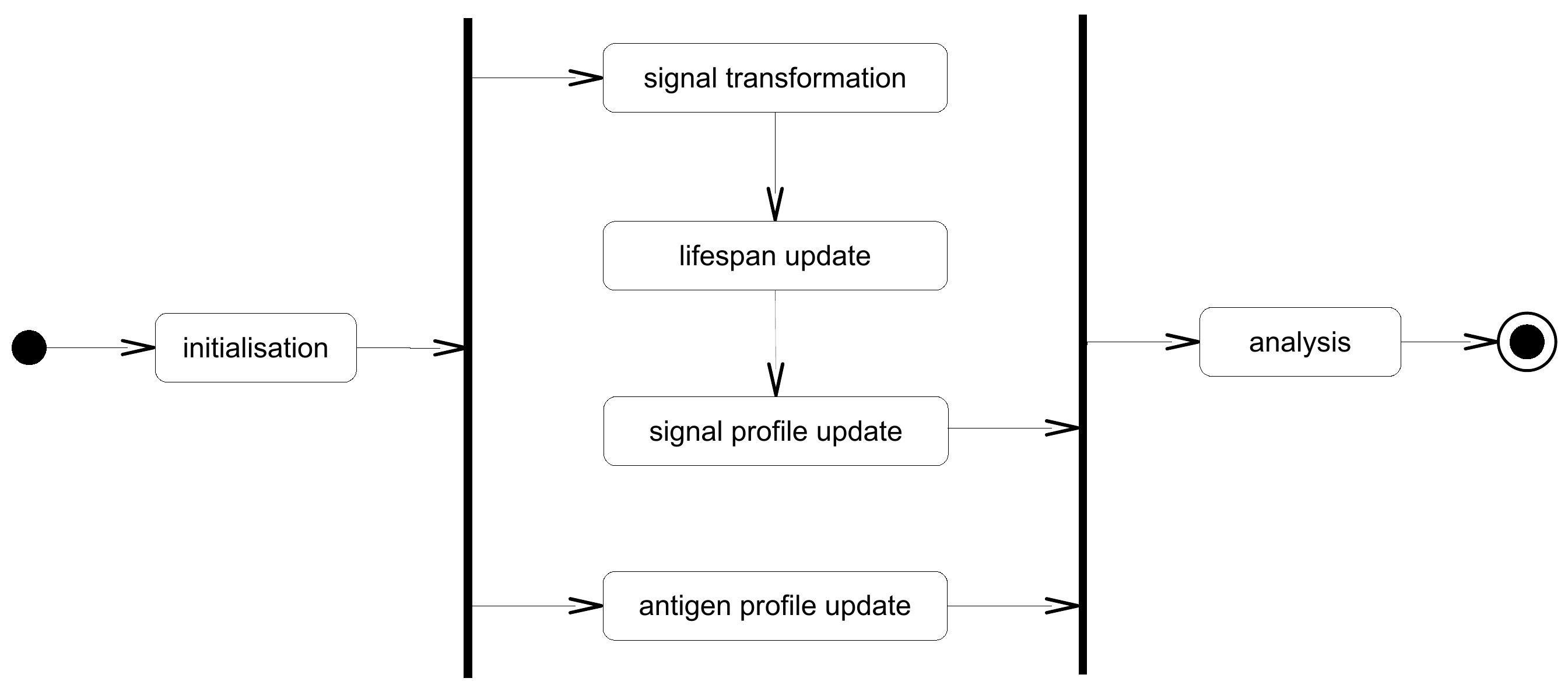}
\end{center}
\caption{An illustration of different steps of the DCA, where the initialisation and analysis steps are performed at the population level and the rest of the steps (bounded within the two vertical lines) are performed at the individual DC level.}
\label{fig:dca_steps}
\end{figure}

\section{Formalisation of the DCA}
In this section, we formally define data structures and procedural operations of the DCA at the population level. Unlike specifications of a single DC at the behavioural level in~\cite{gu2009b}, here we focus on specifying the entire DC population using quantitative measures at the functional level. Instead of using more advanced and possibly more complex interval temporal logic~\cite{moszkowski1985}, set theory and mathematical functions e.g. addition, multiplication and recursion are used for clarity. This aims to present the algorithm in a comprehensive way, which can be easily accessed by readers who may not be familiar with formal logic.  

\subsection{Data Structures}
Define $\mathsf{Signal}\subseteq\mathbb{R}^{m}$ and $\mathsf{Antigen}\subseteq\mathbb{N}$ as the two types of input data. Within a discrete time space $\mathsf{Time} = \{1,2,\ldots,t,\ldots\}$, the input data can be defined as a function $S: \mathsf{Time}\rightarrow \mathsf{Signal}\cup\mathsf{Antigen}$, and $S(t)$ is a data instance at a time point $t\in\mathsf{Time}$. Elements from $\mathsf{Signal}$ are input signal instances of the algorithm, and are represented as $m$-dimensional real-valued vectors. These are usually normalised into a non-negative range, e.g. $[0,1]$, as the input to the DCA. In many applications, $m=3$ is the standard case, corresponding to the three input signal categories of the DCA as described in Section 2. Elements from $\mathsf{Antigen}$ are categorical identifiers of certain objects to be classified, and are often represented as natural numbers starting from one, where the order is ignored.  

Define the weight matrix of signal transformation as
\begin{equation*}
\mathbf{W}= \left[ \begin{array}{ccc}
w_{11} & \cdots & w_{1m} \\
w_{21} & \cdots & w_{2m} \end{array} \right]
\end{equation*}
where each entry $w_{ij}\in\mathbb{R}$. The weight matrix $\mathbf{W}$ is used to transform the $m$-dimensional input signals to two categories of output signals, namely `$CSM$' and `$K$'. It is usually predefined by users and kept constant during runtime. Entries in the weight matrix are based on empirical results from the underlying immunology of natural DCs~\cite{greensmith2007}. 

Let $\mathsf{Population}$ be an index set of DCs and $N=|\mathsf{Population}|$ be the population size ($N=100$ is a popular choice). The index of a DC is $i\in\mathsf{Population}$. The function of assigning the initial lifespan to a DC is defined as $I : \mathsf{Population}\rightarrow\mathbb{R}$, where $I(i)\neq I(j)$ ($i\neq j\in\mathsf{Population}$). The function of initialising the antigen profile of the DC is defined as $M : \mathsf{Population}\rightarrow(a_{i1},a_{i2},...,a_{ik},...)$, where $(a_{i1},a_{i2},...,a_{ik},...)$ is a sequence storing the antigen instances sampled by a DC and $a_{ik}\in\mathsf{Antigen}$. The initial signal profile of a DC is usually set to zero.  

The output of each DC is stored as a pair $(a_{ik},r_{i})\in\mathsf{Antigen}\times \mathbb{R}$ in a list, where $r_{i}$ is the signal profile of a DC when it reaches a termination condition. We also define $\pi_{1}$ and $\pi_{2}$ as projection functions to obtain the first and second elements of a pair respectively. 

\subsection{Procedural Operations}
To access the data structures of the DCA, a series of one-step procedural operations are executed. Formally defining these operations is essential for the algorithm's runtime analysis. At the beginning ($t=1$), the algorithm initialises all the DCs indexed by $\mathsf{Population}$, through assigning the initial values of lifespans and signal profiles, named `{\bf DC initialisation}'. The value of $I(i)$ depends on the distribution function used to generate the initial lifespans of DCs. Both uniform distribution and Gaussian distribution can be applied to generate $I(i)$. The antigen profile of each DC is set as $\mathsf{Null}$ or empty, while the signal profile is set as zero. 

\begin{defn}[\textbf{signal transformation}] The signal transformation function $O: \mathsf{Time}\rightarrow \mathbb{R}\times \mathbb{R}$ is defined as
\begin{equation*}
O(t)=\left \{\begin{array}{l l}
\mathbf{W}^{\mathrm{T}}S(t), &~\mathrm{if}~S(t)\in \mathsf{Signal}; \\
\mathbf{0}, &~\mathrm{otherwise}. \end{array} \right. \\
\end{equation*}
\label{def:signal}
\textnormal{This operation is executed whenever $S(t)\in \mathsf{Signal}$ holds, and it performs the multiplication between a transposed $2\times m$ matrix and an $m$-dimensional vector  to produce a two dimensional vector of output signals, `$CSM$' and `$K$'. These are related to when and how to make decisions respectively. In the case that $S(t)\in \mathsf{Antigen}$, the function returns a zero vector.}
\end{defn}

\begin{defn}[\textbf{lifespan update}] The lifespan update function $F: \mathsf{Time}\times \mathsf{Population}\rightarrow \mathbb{R}$ is defined as
\begin{equation*}
F(t,i)=\left \{\begin{array}{l l}
I(i), &~\mathrm{if}~t=1; \\
I(i)-\pi_{1}(O(t)), &~\mathrm{if}~F(t-1,i)\le 0; \\
F(t-1,i)-\pi_{1}(O(t)), &~\mathrm{otherwise}. \end{array} \right.
\end{equation*}
\label{def:life}
\textnormal{When $t=1$, the initial value of $F$ is $I(i)$, which is the initial lifespan of the DC with an index $i$. It is repeatedly subtracted by $CSM$ signal until the termination condition, $F(t-1,i)\le 0$,  is reached. The function is then reset to `$I(i)-\pi_{1}(O(t))$' (not $I(i)$), due to the function $O(t)$ being executed at a regular basis, e.g. at every single time point $t$.}
\end{defn} 

\begin{defn}[\textbf{signal profile update}] The signal profile update function $G: \mathsf{Time}\times \mathsf{Population}\rightarrow \mathbb{R}$ is defined as
\begin{equation*}
G(t,i)=\left \{\begin{array}{l l}
0, &~\mathrm{if}~t=1; \\
\pi_{2}(O(t)), &~\mathrm{if}~F(t-1,i)\le 0; \\
G(t-1,i)+\pi_{2}(O(t)), &~\mathrm{otherwise}. \end{array} \right.
\end{equation*}
\textnormal{When $t=1$, the value of $G$ is zero, which is the initial signal profile of the DC with an index $i$. It is repeatedly increased by the $K$ signal until the termination condition is reached. The function is then reset to `$\pi_{2}(O(t))$' (not 0), due to the function $O(t)$ being executed at a regular basis, e.g. at every single time point $t\in\mathsf{Time}$.}
\end{defn}

\begin{defn}[\textbf{antigen profile update}] The antigen profile update function $H: \mathsf{Time}\times \mathsf{Population}\rightarrow(a_{i1},a_{i2},\ldots,a_{ik},\ldots)$ is defined as
\begin{equation*}
H(t,i)=\left \{\begin{array}{l l}
\emptyset, &~t=1; \\
(H(t-1,i),S(t)), &~\mathrm{if}~S(t)\in \mathsf{Antigen}~\mathrm{and}~t>1; \\ 
H(t-1,i) &~\mathrm{if}~S(t)\in \mathsf{Signal}~\mathrm{and}~t>1, \end{array} \right.
\end{equation*}
\textnormal{where $H$ is initially empty. As a new antigen instance arrives, it is sampled by the DC with an index $i$ and its antigen profile is updated until the termination condition is reached. This function merely appends a list to another, which can be done in constant time regardless of the length of the lists, and thus considered as one-step operation as well. It is performed individually by each DC and the index of the DC selected to sample an incoming $S(t)\in \mathsf{Antigen}$ is defined as $i\equiv\theta\mod{N}$ ($i$ is congruent with $\theta$ modulo $N$), where $\theta$ is the number of antigen instances up to time $t$. This is termed the `sequential sampling' rule.}
\end{defn}

\begin{defn}[\textbf{output record}] Let $r_{i}=G(t,i)~\mathrm{s.t.}~F(t-1,i)\le 0$ be the signal profile of a DC, and $L : \mathbb{N}\rightarrow\mathsf{Antigen}\times\mathbb{R}$ denote the function that maps an index $j\in\mathbb{N}$ to an element of the output list. The output record function is defined as
\begin{equation*}
L(j) = (a_{ik},r_{i})\quad\forall k
\end{equation*}
\textnormal{where $L(j)$ is the $j$th element of the list. The antigen profile often consists of multiple values while the signal profile only contains one single value in the DC with an index $i$. This function essentially enumerates all the possible pairs and appends them to the output list, where each of them is assigned an index $j$. The list is then used to produce the final detection results in the analysis phase of the DCA.}
\end{defn}

\begin{defn}[\textbf{antigen counter}] The antigen counter function $C: \mathbb{N}\times\mathsf{Antigen}\rightarrow \{0,1\}$ is defined as
\begin{equation*}
C(j,\alpha)=\left \{\begin{array}{l l}
1, &~\mathrm{if}~\pi_{1}(L(j))=\alpha; \\
0, &~\mathrm{otherwise}. \end{array} \right. \\
\end{equation*}
\end{defn}

\begin{defn}[\textbf{signal profile abstraction}] The signal profile abstraction function $R: \mathbb{N}\times \mathsf{Antigen}\rightarrow\mathbb{R}$ is defined as
\begin{equation*}
R(j,\alpha)=\left \{\begin{array}{l l}
\pi_{2}(L(j)), &~\mathrm{if}~\pi_{1}(L(j))=\alpha; \\
0, &~\mathrm{otherwise}. \end{array} \right.
\end{equation*}
\textnormal{In the two functions above, $\alpha\in\mathsf{Antigen}$ is an antigen type. The function $C$ is used to count the number of instances of antigen type $\alpha$, and the function $R$ is used to calculate the sum of all $K$ values associated with antigen type $\alpha$. These two operations are performed for every antigen type and involve scanning the sequence of $L(j)$ in its entirety.} 
\end{defn}

\begin{defn}[\textbf{anomaly metric calculation}] Given the number of input instances is equal to $n$, the anomaly metric calculation function is defined as.
\begin{align*}
& K(\alpha)=\frac{\gamma}{\beta}~\mathrm{with}~\beta=\sum_{j=1}^{n} C(j,\alpha)~\mathrm{and}~\gamma=\sum_{j=1}^{n} R(j,\alpha)
\end{align*}
\textnormal{As $\mathsf{Antigen}\neq\emptyset$ and $\alpha\in\mathsf{Antigen}$, the minimum number of antigen instances is equal to one, so is the minimum number of antigen types. Therefore, we have $\beta\ge 1$. A threshold $\varepsilon$ can be applied for further classification. The value of the threshold depends on the underlying characteristics of the dataset used. An antigen type $\alpha$ is classified as anomalous if $K({\alpha})>\varepsilon$, and normal otherwise.}
\end{defn}
 
\section{Analysis of Runtime Complexity}
\subsection{The Standard DCA}
By combining the procedural operations of the DCA with {\bf for}, {\bf while} loops or {\bf if} statements the algorithm can be presented as in Algorithm~\ref{alg:ddca}. Previous applications of the DCA have shown that the runtime of the algorithm is relatively short and the consumption of computation power is also low~\cite{greensmith2008b}. However, theoretical analysis of the runtime complexity of the DCA, given a set of input data, has not yet been performed. Runtime analysis involves calculating the number of primitive operations or steps executed by an algorithm~\cite{cormen2009}. The analysis is often based on asymptotic theory, and its aim is to theoretically show the runtime complexity of an algorithm as a function of increasing input size $n$.  

As mentioned previously, applications of the DCA are referred to the area of anomaly detection. In AIS, one popular anomaly detection algorithm is known as the negative selection algorithm, which was shown to have an exponential runtime complexity~\cite{timmis2008}. An attempt of reducing the worst-case runtime complexity from exponential to polynomial was reported in~\cite{elberfeld2011}, however this reduction is only applicable when the input feature space is bit strings instead of real numbers. Other popular anomaly detection algorithms are more or less derived from techniques in machine learning~\cite{chandola2009}, e.g. K-Nearest Neighbour (KNN) with a runtime complexity of $\mathcal{O}(nd)$~\cite{duda2000}, decision trees algorithms with an exponential runtime complexity~\cite{duda2000}, and support vector machines (SVM) with a runtime complexity of $\mathcal{O}(n^{2}d)$~\cite{burges1998}, where $n$ is the number of input instances and $d$ is the dimensionality. As a result, the subsequent runtime analysis of the DCA reveals if the algorithm is competitive against other state-of-the-art anomaly detection algorithms. 

\begin{algorithm}[t]
\SetAlgoLined
\LinesNumbered
\SetKwInOut{Input}{input}
\SetKwInOut{Output}{output}
\Input{input data $S(t)$}
\Output{anomaly metric $K({\alpha})$}
\BlankLine
\ForEach(\tcc*[f]{Initialisation phase}){DC}{
DC initialisation\;
}
\While(\tcc*[f]{Detection phase}){input data}{ 
\If{antigen}{
select a DC $i$\; 
do $H(t, i)$\;
}
\If{signal}{
do $O(t)$\;
\ForEach{DC}{
do $F(t,i)$\;
do $G(t,i)$\;
\If{$F(t-1,i)\le 0$}{
do $L(j)$\;
}
}
}
}
\While(\tcc*[f]{Analysis phase}){output list}{
\ForEach{antigen type}{
do $C(j, \alpha)$\;
do $R(j, \alpha) $\;
do $K(\alpha)$\;
}
}
\BlankLine
\caption{Pseudocode of the DCA implementation, the selection of a DC when an antigen instance is presented is performed according to the `sequential sampling' rule. \label{alg:ddca}}
\end{algorithm}

Let $a$ be the number of antigen instances within the input data, $b=|\mathsf{Antigen}|$ be the number of antigen types and $N$ be the size of the DC population. According to previous applications~\cite{aihammadi2008, oates2007, greensmith2007a, mokhtar2009, gu2009c}, $N$ is usually user defined and independent of the increase of data size $n$. However, we often assume that $1\le N\le n$. In order to make the following analyses more general, the population size $N$ is considered a parameter of the algorithm. As the type of input data instances is either $\mathsf{Antigen}$ or $\mathsf{Signal}$, if the number of antigen instances is equal to $a$, the number of signal instance is $n-a$.  For the ease of analysis, the algorithm is divided into three phases as follows: 
\begin{enumerate}
\item {\tt Initialisation phase} - Line 1 to Line 3;
\item {\tt Detection phase} - Line 4 to Line 19;
\item {\tt Analysis phase} - Line 20 to Line 26.
\end{enumerate}

The calculation of runtime is performed phase by phase. Let $T_{1}(n)$, $T_{2}(n)$ and $T_{3}(n)$ be the runtime  of each phase respectively, and $T(n)=T_{1}(n) + T_{2}(n) + T_{3}(n)$ is the overall runtime  of the algorithm. Details of all the primitive operations of the algorithm are listed in Table~\ref{tab:priops}, including the {\bf line number} and the {\bf description} of each operation as well as the {\bf number of times} an operation is executed, corresponding to Algorithm~\ref{alg:ddca}. 
\begin{table}
\begin{center}
\begin{tabular}{c|l|l}
\hline
{\bf Line No.} & {\bf Description} & {\bf Times} \\ \hline\hline
1 & {\bf for} loop & $N$ \\ 
2 & DC initialisation & $N$ \\ 
4 & {\bf while} loop  & $n$ \\
5 & {\bf if} statement & $a$ \\
6 & select a DC $i$ & $a$ \\
7 & antigen profile update ($H(t, i)$) & $a$ \\
9 & {\bf if} statement  & $n-a$ \\
10 & signal transformation ($O(t)$) & $n-a$ \\
11 & {\bf for} loop &  $(n-a)\times N$ \\
12 & lifespan update ($F(t,i)$) & $(n-a)\times N$ \\
13 & signal profile update ($G(t,i)$) & $(n-a)\times N$ \\
14 & {\bf if} statement & $(n-a)\times N$ \\
15 & output record ($L(j)$) & $(n-a)\times N$ \\
20 & {\bf while} loop & $a$ \\
21 & {\bf for} loop & $a\times b$ \\
22 & antigen counter ($C(j, \alpha)$) & $a\times b$ \\
23 & signal profile abstraction ($R(j, \alpha)$) & $a\times b$ \\
24 & anomaly metric calculation ($K(\alpha)$) & $a\times b$ \\
\hline
\end{tabular}
\caption{Details of primitive operations of Algorithm~\ref{alg:ddca}, where $N$ is the size of DC population, $n$ is the data size, $a$ is the number of antigen instances, and $b$ is the number of antigen types.}
\label{tab:priops}
\end{center}
\end{table}

The {\tt initialisation phase} is only executed once for the entire DC population at the commencement of the algorithm. Its runtime is independent of the number of input instances $n$, but is determined by the population size $N$. Therefore, the runtime  of the {\tt initialisation phase} is calculated as follows. 
\begin{equation*}
T_{1}(n) = N + N = \mathcal{O}(N)
\end{equation*}

The runtime  of the {\tt detection phase} depends on the data size $n$, the number of antigen instances $a$, the number of signal instances $n-a$ and the size of the DC population $N$. Thus the runtime of the {\tt detection phase} is calculated as follows.
\begin{align*}
& T_{2}(n) = n + 3a + 2(n-a) + 5(n-a)N = 3n + 5N(n-a) + a\\
\Rightarrow\quad & \{a\le n\} \\
& T_{2}(n) = \mathcal{O}(n) + \mathcal{O}(N(n-a)) + \mathcal{O}(a) = \mathcal{O}(nN)
\end{align*}

The runtime of the {\tt analysis phase} is dependent on the size of the output list that is equal to the number of antigen instances $a$ and the number of antigen types $b$. The value of $b$ is determined by the number of states or entities to classify within a problem domain. Here we merely focus on the worst-case scenario, which occurs if $b=a$, and the number of antigen types is equal to the number of antigen instances. Therefore, we have $1\le b\le a\le n$. The runtime  of the {\tt analysis phase} is thus calculated as follows. 
\begin{equation*}
T_{3}(n) = a + ab + 3ab = \mathcal{O}(n^{2})
\end{equation*}

\begin{thm}
The runtime complexity of the standard DCA is bounded by $\mathcal{O}(n^{2})$, with respect to the data size $n$.
\label{thm:run}
\end{thm}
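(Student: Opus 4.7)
The plan is to assemble the theorem directly from the three per-phase bounds already derived in the preceding paragraphs, namely $T_{1}(n) = \mathcal{O}(N)$, $T_{2}(n) = \mathcal{O}(nN)$ and $T_{3}(n) = \mathcal{O}(n^{2})$, using the identity $T(n) = T_{1}(n) + T_{2}(n) + T_{3}(n)$ together with the sum rule for asymptotic notation. The core observation I would invoke is the standing assumption $1 \le N \le n$, which lets me collapse the three bounds into a single dominating term.

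First I would absorb $T_{1}$ into $T_{2}$: since $N \le nN$ for every $n \ge 1$, the initialisation cost $\mathcal{O}(N)$ is dominated by the detection-phase cost $\mathcal{O}(nN)$. Next I would absorb $T_{2}$ into $T_{3}$: because $N \le n$, we have $nN \le n^{2}$, hence $\mathcal{O}(nN)$ is subsumed by $\mathcal{O}(n^{2})$. Adding the three phase contributions therefore yields $T(n) = \mathcal{O}(n^{2})$, which is the claim. Along the way I would remark that the worst-case analysis-phase bound is actually achieved, namely when $b = a = n$ so that every input instance is an antigen with a distinct type; hence $\mathcal{O}(n^{2})$ is tight and the theorem is a genuine worst-case statement rather than a loose upper bound.

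The main (and essentially the only) obstacle is a bookkeeping point rather than a substantive difficulty: the discharge of the assumption $N \le n$ must be made explicit when merging the detection and analysis bounds, since without it the theorem would instead read $\mathcal{O}(\max(n^{2}, nN))$ — exactly the shape of bound that reappears for the segmented variant studied later in Section~4. All of the genuinely quantitative work — counting primitive operations line by line, bounding $a$, $b$, and $n-a$ by $n$, and dealing with the worst case $b = a$ — has already been carried out in Table~\ref{tab:priops} and the per-phase derivations, so the proof of Theorem~\ref{thm:run} itself is a one-line application of sum and max rules for big-$\mathcal{O}$.
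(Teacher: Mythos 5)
Your proposal is correct and follows essentially the same route as the paper: summing the three per-phase bounds $T_{1}(n)=\mathcal{O}(N)$, $T_{2}(n)=\mathcal{O}(nN)$, $T_{3}(n)=\mathcal{O}(n^{2})$ and discharging the assumption $1\le N\le n$ to conclude $T(n)=\mathcal{O}(n^{2})$, with the same tightness remark the paper makes. No substantive differences.
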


\begin{proof}[Proof]
\begin{align*}
& T(n) = T_{1}(n) + T_{2}(n) + T_{3}(n) \\
\Rightarrow\quad & \{T_{1}(n) = \mathcal{O}(N),~T_{2}(n)=\mathcal{O}(nN),~\mathrm{and}~T_{3}(n)=\mathcal{O}(n^{2})\} \\
& T(n) = \mathcal{O}(N) + \mathcal{O}(nN) + \mathcal{O}(n^{2}) \\
\Rightarrow\quad & \{1\le N\le n\} \\
& T(n)=\mathcal{O}(n^{2})
\end{align*}
Bounds provided by $\mathcal{O}$-notation are asymptotically tight.
\end{proof}

As suggested by Theorem~\ref{thm:run}, the DCA has a worst case runtime complexity of $\mathcal{O}(n^{2})$, which is quadratic. As a result, the DCA is indeed competitive in terms of processing large-sized datasets while keeping the runtime complexity under control, when compared to state-of-the-art anomaly detection algorithms. According to previous applications~\cite{aihammadi2008, greensmith2007a,oates2007,mokhtar2009,gu2009c, gu2008}, we often have $N\ll n$. Such a premise makes the runtime complexity of algorithm's initialisation and detection phases overall linear, while the analysis phase stays quadratic. This leads to the following work of modifying the analysis phase of the algorithm via an introduction of segmentation. 

\subsection{The DCA with Segmentation}
Segmentation is introduced to adapt the algorithm to online analysis~\cite{gu2009a}. Instead of analysing the processed information in a single operation at the termination of the detection phase, the output list is partitioned into smaller segments and the analysis is performed within each segment. We postulate that segmentation could potentially generate finer grained results, as well as performing analysis in parallel with the detection process. Here, we focus on the antigen based segmentation approach, as it is more favourable in actual applications~\cite{gu2009a}. One may think that the system with segmentation produces the final detection results much faster, as the analysis is performed during detection on a much smaller chunk of processed information. Based on the analysis of the standard DCA, it is possible to theoretically analyse the effect of segmentation on the algorithm's runtime complexity. Let $z$ be a predefined segment size and $1\le z\le n$. A segment is generated once the size of the output list reaches $z$, and an analysis on the current batch of processed information in the output list is performed. 

As a post-processing mechanism, segmentation only affects the {\tt analysis phase} of the algorithm, but not the {\tt initialisation phase} or {\tt detection phase}. The search space of the analysis of a segment is determined by the value of $z$. The number of segments created is equal to $\lceil n/z\rceil$, and they are indexed by $\{1,2,\ldots,k,\ldots,\lceil n/z\rceil\}$. Let $a_{k}\le z$ and $b_{k}\le z$ denote the number antigen instances and the number of antigen types in the $k$th segment respectively. As a result, the runtime complexity of each segment at the analysis phase is $T^{k}_{3}(n) = a_{k}b_{k}\le z^{2} = \mathcal{O}(z^{2})$.

\begin{thm}
The runtime complexity of the DCA with segmentation is bounded by $\mathcal{O}(\max(nN,nz))$, with respect to the data size $n$, the DC population size $N$, and the segment size $z$.
\label{thm:seg}
\end{thm}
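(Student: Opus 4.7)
The plan is to reuse the phase-by-phase decomposition from the proof of Theorem~\ref{thm:run}, observe that segmentation is a purely post-processing modification, and then re-estimate only the analysis phase. Since the initialisation phase and the detection phase are untouched, I would directly import the bounds $T_{1}(n)=\mathcal{O}(N)$ and $T_{2}(n)=\mathcal{O}(nN)$ established earlier. Thus the only new quantity to control is the runtime $T_{3}(n)$ of the segmented analysis phase.

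For $T_{3}(n)$, I would argue as follows. The algorithm produces $\lceil n/z\rceil$ segments, and the statement preceding the theorem already gives $T_{3}^{k}(n)\le z^{2}$ for every segment index $k$. Summing the per-segment costs,
\begin{equation*}
T_{3}(n) \;=\; \sum_{k=1}^{\lceil n/z\rceil} T_{3}^{k}(n) \;\le\; \left\lceil \frac{n}{z}\right\rceil z^{2} \;=\; \mathcal{O}(nz).
\end{equation*}
(If one prefers a tighter route, the bound $\sum_k a_k b_k \le z\sum_k a_k \le zn$ leads to the same estimate without invoking $b_k\le z$.) Combining the three phases gives
\begin{equation*}
T(n) \;=\; T_{1}(n)+T_{2}(n)+T_{3}(n) \;=\; \mathcal{O}(N)+\mathcal{O}(nN)+\mathcal{O}(nz).
\end{equation*}
Under the standing assumption $1\le N\le n$, the $\mathcal{O}(N)$ term is absorbed by $\mathcal{O}(nN)$, and the remaining two terms combine into $\mathcal{O}(\max(nN,nz))$, which is the desired bound.

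The only genuine subtlety, and the step I would be most careful about, is the segment-summation: it is tempting to quote $\mathcal{O}(z^{2})$ for the analysis phase as a whole, forgetting the factor $\lceil n/z\rceil$. Making the telescoping explicit, and observing that the factor of $z$ in the denominator cancels one power of $z$ from $z^{2}$, is what yields the $nz$ term rather than $z^{2}$ or $n^{2}$. Apart from this, the proof is routine accounting, and I would close by remarking (as in Theorem~\ref{thm:run}) that the $\mathcal{O}$-bounds are asymptotically tight in the worst case $a_k=b_k=z$.
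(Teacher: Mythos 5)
Your proposal is correct and follows essentially the same route as the paper's own proof: import $T_{1}(n)=\mathcal{O}(N)$ and $T_{2}(n)=\mathcal{O}(nN)$ unchanged, bound the segmented analysis phase by $\sum_{k=1}^{\lceil n/z\rceil}a_{k}b_{k}\le\lceil n/z\rceil z^{2}=\mathcal{O}(nz)$, and combine under $1\le N\le n$ and $1\le z\le n$ to obtain $\mathcal{O}(\max(nN,nz))$. The explicit attention to the $\lceil n/z\rceil$ factor is exactly the step the paper's derivation also hinges on, so no further comment is needed.
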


\begin{proof}[Proof]
\begin{align*}
\Rightarrow\quad &\{T_{1}(n)=\mathcal{O}(N)~\mathrm{and}~T_{2}(n)=\mathcal{O}(nN)\} \\
& T(n) = \mathcal{O}(N) + \mathcal{O}(nN) + \sum_{k=1}^{\lceil n/z\rceil}a_{k}b_{k}\le\mathcal{O}(N) + \mathcal{O}(nN)+\lceil n/z\rceil\mathcal{O}(z^{2}) \\
\Rightarrow\quad & \{1\le N\le n~\mathrm{and}~1\le z\le n\} \\
& T(n) = \mathcal{O}(N) + \mathcal{O}(nN) + \mathcal{O}(nz) = \mathcal{O}(\max(nN,nz))
\end{align*}
\end{proof}

As shown in Theorem~\ref{thm:seg}, the introduction of segmentation changes the overall runtime complexity of the algorithm to $\mathcal{O}(\max(nN,nz))$. Depending on the values of $N$ and $z$, the runtime complexity can be either quadratic ($N=n\vee z=n$) or linear ($N\ll n\wedge z\ll n$). This is very attractive for online detection tasks, as it provides a means of online analysis that continuously and periodically produces results during detection. Additionally, the DCA with segmentation produces significantly different and better results than the standard version~\cite{gu2009a}. Therefore, segmentation is an important and necessary addition to the DCA from a practical point of view. Thus far only static segmentation with a fixed segment size has been applied to the DCA. The effect of variable segment sizes on the detection performance still requires further investigation. 

\section{Formulation of Runtime Properties}
Two runtime variables of the DCA are assessed, as they can be used as quantitative indicators of the changes to the algorithm's runtime behaviour. They are the number of matured DCs (those which reach the termination condition and are reset) and the number of processed antigens respectively. The number of matured DCs indicates that the amount of processed information is related to signal instances. Conversely, the number of processed antigens implies that the amount of processed information is related to antigen instances. In this section, the formulation of the above properties is given, to build up the mathematical foundation of the algorithm. This is obtained with respect to a time interval $[t_{b},t_{e}] := \{t_{b},t_{b}+1,t_{b}+2,\ldots,t_{e}\}\subseteq\mathsf{Time}$. 

\subsection{Number of Matured DCs}
The number of matured DCs within a time interval is related to the reset frequency of the DC population, which indicates the work-load of the DC population. This can be used to determine whether the current setup of the current system should be altered. If the frequency of DC resetting is too high, most of the DCs become matured and get reset before they acquire a sufficient amount of information. As a result, the range of lifespans of the DC population should be extended, allowing more information to be obtained. In conduction with extending the range of lifespans of the DC population, it is necessary to also increase the size of the DC population, so that the lifespans do not become sparse. 

This becomes crucial if the system is deployed online, as an online system is often required to perform continuous detection and adapt to the changes of real-time situations. The number of matured DCs in the DC population depends on the distribution function used for the generation of DC lifespans, in addition to the input data within the time interval of interest. To make the analysis manageable, two types of distributions for generating the initial DC lifespans are considered, namely uniform distribution~\cite{atkinson2004} and Gaussian distribution~\cite{atkinson2004}. The calculations will be done through using the mean value of lifespans of the DC population and the mean value of $CSM$ signals corresponding to all the input signal instances. They focus on the average number of matured DCs within a given time interval rather than the particular number per iteration. However, as the time interval is reduced, e.g. to the duration of one iteration, the two numbers could become approximate to each other.

\begin{prop}[uniform distribution]
If the lifespans of the DC population are generated from an arithmetic series $x_{i}=x_{1}+(i-1)d$, where $x_{i}$ is the n$\mathrm{th}$ element, $x_{1}$ is the first element and $d$ is the interval between two successive elements, the number of matured DCs in the DC population $\delta$ can be calculated as follows.
\begin{equation*}
\delta = \left\lfloor \frac{N\sum_{t=t_{b}}^{t_{e}}\pi_{1}(O(t))}{(t_{e}-t_{b})(x_{1}+\frac{N-1}{2}d)}\right\rfloor
\end{equation*}
\label{prop:mrarth}
\end{prop}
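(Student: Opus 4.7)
The plan is to derive the formula by a mean-field averaging argument that treats both the DC lifespans and the CSM signal stream by their averages over the population and the time interval respectively, so that population-level bookkeeping (total lifespan consumed versus total lifespan allocated per reset) yields the stated ratio.

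I would first compute the mean lifespan $\bar{x}$ of the DC population. Summing the arithmetic series gives $\sum_{i=1}^{N} x_{i} = N x_{1} + d\,N(N-1)/2$, so $\bar{x} = x_{1} + (N-1)d/2$, which is exactly the expression appearing in the denominator of the proposition. In parallel, I would introduce the mean per-iteration CSM value $\overline{\mathrm{CSM}} = \frac{1}{t_{e}-t_{b}}\sum_{t=t_{b}}^{t_{e}}\pi_{1}(O(t))$, obtained by averaging the first component of $O(t)$ over the interval.

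Next, by Definition~\ref{def:life} every DC processes the same CSM stream, so in steady state DC $i$ depletes its lifespan $x_{i}$ in approximately $x_{i}/\overline{\mathrm{CSM}}$ iterations and is then reset. The per-iteration maturation rate of the whole population is therefore $\sum_{i=1}^{N}\overline{\mathrm{CSM}}/x_{i}$, and approximating the harmonic mean by the arithmetic mean (i.e.\ $\sum_{i}1/x_{i}\approx N/\bar{x}$) yields the per-iteration rate $N\,\overline{\mathrm{CSM}}/\bar{x}$. Substituting back the definitions of $\overline{\mathrm{CSM}}$ and $\bar{x}$ recovers the ratio inside the floor, and applying $\lfloor\cdot\rfloor$ for integer counting closes the derivation.

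The main obstacle is justifying the mean-field substitution $\sum_{i} 1/x_{i}\approx N/\bar{x}$: by Jensen's inequality this underestimates the true harmonic mean, so the approximation is only tight when the spread $d(N-1)$ is small relative to $x_{1}$. I would address this either by a Taylor expansion showing the error is of order $d^{2}/x_{1}^{3}$, or, more cleanly, by doing the bookkeeping purely at the population level: the total lifespan consumed per iteration is $N\,\overline{\mathrm{CSM}}$, and on average each reset expends one mean lifespan $\bar{x}$, so dividing gives the stated expression without ever invoking $1/x_{i}$ individually. A secondary issue is boundary effects from DCs whose current cycle straddles $t_{b}$ or $t_{e}$, but these contribute $O(1)$ terms that are absorbed by the floor operator.
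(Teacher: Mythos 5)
Your proposal is correct and rests on the same mean-field idea as the paper's proof, but it supplies the justification that the paper omits. The paper's argument consists entirely of defining $\varphi$ as the mean per-iteration $CSM$ and $\mu_{1}=\frac{x_{1}+x_{N}}{2}=x_{1}+\frac{N-1}{2}d$ as the mean lifespan, and then asserting $\delta=\lfloor N\varphi/\mu_{1}\rfloor$; no reason is given for why the maturation count should equal total $CSM$ intake divided by the \emph{arithmetic}-mean lifespan. Your derivation makes the hidden step explicit: DC $i$ matures roughly once every $x_{i}/\overline{\mathrm{CSM}}$ iterations, so the exact per-iteration rate is $\overline{\mathrm{CSM}}\sum_{i}1/x_{i}$, and the stated formula follows only after replacing the harmonic mean of the lifespans by the arithmetic mean. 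You are right that by Jensen's inequality this substitution systematically underestimates the true rate unless $d(N-1)\ll x_{1}$; this is an unacknowledged approximation in the proposition itself, not merely in your write-up, so flagging it is a genuine improvement on the paper. One caution: your ``cleaner'' population-level bookkeeping (total lifespan consumed per iteration divided by one mean lifespan per reset) does not actually escape the issue, because the lifespan expended per reset, averaged over resets, is weighted toward the short-lived DCs that reset more often and is therefore again the harmonic mean rather than the arithmetic mean; the honest way to close the argument is your Taylor/Jensen error estimate, together with the observation (which your rate-based reading makes, and the paper leaves ambiguous) that the formula is a per-iteration rate, with boundary effects at $t_{b}$ and $t_{e}$ absorbed into the floor.
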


By default, the ascending order of lifespans of the DC population corresponds to the order of its indices. As a result, if the size of the DC population is equal to $N$, the lifespan of the last DC with an index $i=N$ is given as $x_{N}=x_{1}+(N-1)d$. As demonstrated in Section 3, the termination condition where a DC matures as soon as its lifespan reaches zero through subtracting the $CSM$ signals. 

\begin{proof}[Proof]
\begin{align*}
\Rightarrow\quad & \{\varphi = \frac{\sum_{t=t_{b}}^{t_{e}}\pi_{1}(O(t))}{t_{e}-t_{b}}~\mathrm{and}~\mu_{1} = \frac{x_{1}+x_{N}}{2}=x_{1}+\frac{N-1}{2}d\} \\
& \delta = \left\lfloor \frac{N\varphi}{\mu_{1}}\right\rfloor =\left\lfloor \frac{N\sum_{t=t_{b}}^{t_{e}}\pi_{1}(O(t))}{(t_{e}-t_{b})(x_{1}+\frac{N-1}{2}d)}\right\rfloor
\end{align*}
Where $\varphi$ is the mean value of the {\it CSM} signals within the interval $[t_{b},t_{e}]$ and $\mu_{1}$ is the mean lifespan of the DC population. 
\end{proof}

Uniform distribution is used in the dDCA~\cite{greensmith2008b} to generate the initial lifespans of the DC population. This produces a set of values that are uniformly distributed within a certain range. According to Proposition~\ref{prop:mrarth}, if the parameters (first element $x_{1}$ and the interval $d$) of the arithmetic series are given, the number of matured DCs within the time interval $[t_{b},t_{e}]$ can be calculated accordingly.  

\begin{prop}[Gaussian distribution]
If the lifespans of the DC population are generated from a Gaussian distribution $x\sim\mathcal{N}(\mu,\sigma^{2})$, then the following formula holds.
\begin{equation*}
\mathrm{Pr}\left(\left\lfloor\frac{N\sum_{t=t_{b}}^{t_{e}}\pi_{1}(O(t))}{(\mu-\frac{2\sigma}{\sqrt{N}})(t_{e}-t_{b})}\right\rfloor \le\delta\le \left\lfloor\frac{N\sum_{t=t_{b}}^{t_{e}}\pi_{1}(O(t))}{(\mu+\frac{2\sigma}{\sqrt{N}})(t_{e}-t_{b})}\right\rfloor\right)=0.95
\end{equation*}
\label{prop:mrnorm}
\end{prop}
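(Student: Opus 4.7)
The plan is to build directly on Proposition~\ref{prop:mrarth}, treating the number of matured DCs as a deterministic function of the mean lifespan $\mu_1$ of the DC population and then transferring the randomness of the Gaussian model to $\delta$ through that function. Conceptually, the arithmetic-series case of Proposition~\ref{prop:mrarth} establishes the identity
\begin{equation*}
\delta \;=\; \left\lfloor \frac{N\varphi}{\mu_{1}}\right\rfloor,\qquad \varphi=\frac{\sum_{t=t_{b}}^{t_{e}}\pi_{1}(O(t))}{t_{e}-t_{b}},
\end{equation*}
for any realisation of the population once its mean lifespan $\mu_1$ is fixed. This step is structural rather than distributional and isolates the only random quantity on the right-hand side, namely $\mu_1$.

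Next, I would exploit the fact that if each initial lifespan $x_i$ is drawn i.i.d.\ from $\mathcal{N}(\mu,\sigma^{2})$, then the sample mean $\mu_{1}=\frac{1}{N}\sum_{i=1}^{N}x_{i}$ is itself Gaussian with $\mu_{1}\sim\mathcal{N}(\mu,\sigma^{2}/N)$. The standard 68--95--99.7 rule for a normal random variable then gives
\begin{equation*}
\Pr\!\left(\mu-\tfrac{2\sigma}{\sqrt{N}}\le \mu_{1}\le \mu+\tfrac{2\sigma}{\sqrt{N}}\right)\approx 0.95,
\end{equation*}
which is the only probabilistic input the proposition needs.

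The third step is to push this confidence interval through the map $\mu_{1}\mapsto N\varphi/\mu_{1}$. Since $\varphi\ge 0$ and lifespans are positive, this map is monotonically decreasing on its domain, so the endpoints of the confidence interval for $\mu_{1}$ correspond (in reversed order) to the extreme values of $N\varphi/\mu_{1}$. Applying the floor function to each endpoint and using monotonicity of $\lfloor\cdot\rfloor$ yields a two-sided sandwich bound on $\delta$ that holds on exactly the same event as the two-sigma bound on $\mu_{1}$, giving the stated probability of $0.95$.

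The main obstacle I expect is handling the floor function cleanly: taking $\lfloor\cdot\rfloor$ of each endpoint preserves the inclusion of events but can, in edge cases, widen the containing interval by one, so the resulting probability is technically $\ge 0.95$ rather than $=0.95$. I would address this by noting that the two-sigma value is itself an approximation of the exact Gaussian tail probability, so equality is understood in the same asymptotic sense. A secondary technical point is that the derivation tacitly requires $\mu-2\sigma/\sqrt{N}>0$, so that the lower endpoint of the lifespan interval is positive and the division is well-defined; this is reasonable under the modelling assumption that initial lifespans are positive, but should be flagged as a standing hypothesis.
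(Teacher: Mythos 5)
Your proposal follows essentially the same route as the paper's own proof: treat $\delta = \lfloor N\varphi/\mu_{2}\rfloor$ as a deterministic function of the sample mean lifespan, use $\mu_{2}\sim\mathcal{N}(\mu,\sigma^{2}/N)$ with the two-sigma rule to get the $0.95$ interval, and invert the (decreasing) map to obtain the sandwich bound on $\delta$. Your added caveats --- that the floor at the endpoints makes the probability technically $\ge 0.95$, and that $\mu - 2\sigma/\sqrt{N}>0$ is a standing hypothesis --- are refinements the paper silently omits, but they do not change the argument.
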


\begin{proof}[Proof]
\begin{align*}
\Rightarrow\quad& \{\varphi = \frac{\sum_{t=t_{b}}^{t_{e}}\pi_{1}(O(t))}{t_{e}-t_{b}}~\mathrm{and}~\mu_{2}\sim\mathcal{N}(\mu,\frac{\sigma^{2}}{N})\} \\
& \mathrm{Pr}\left(\mu-2\frac{\sigma}{\sqrt{N}}\le \mu_{2}\le\mu+2\frac{\sigma}{\sqrt{N}}\right)=0.95 \\
\Rightarrow\quad & \{\delta = \left\lfloor \frac{N\varphi}{\mu_{2}}\right\rfloor = \left\lfloor \frac{N}{\mu_{2}(t_{e}-t_{b})}\sum_{t=t_{b}}^{t_{e}}\pi_{1}(O(t))\right\rfloor\} \\
& \left\lfloor\frac{N\sum_{t=t_{b}}^{t_{e}}\pi_{1}(O(t))}{(\mu-\frac{2\sigma}{\sqrt{N}})(t_{e}-t_{b})}\right\rfloor \le\delta\le \left\lfloor\frac{N\sum_{t=t_{b}}^{t_{e}}\pi_{1}(O(t))}{(\mu+\frac{2\sigma}{\sqrt{N}})(t_{e}-t_{b})}\right\rfloor
\end{align*}
$\mathrm{Pr}(\cdot)$ is the probability operator. If the sample size is $N$, the sample mean $\mu_{2}$ is bounded by a Gaussian distribution $x\sim\mathcal{N}(\mu,\frac{\sigma^{2}}{N})$~\cite{atkinson2004}. The lower and upper bounds of the sample mean can be used to induce the bounds of the number of matured DCs. 
\end{proof}

In practice, Gaussian distribution has not been used for generating the lifespans of the DC population, but it has been of great interest~\cite{oates2010} and would be a priority of future investigation. According to Proposition~\ref{prop:mrnorm}, if we know the mean ($\mu$) and variance ($\sigma^{2}$) of the Gaussian distribution from which the lifespans of the DC population are generated, the size of DC population $N$, and the input data instances within the time interval $[t_{b},t_{e}]$, we can show that there is a $0.95$ chance the number of matured DCs is bounded by the lower and upper bounds. This could provide sufficient information for adjusting the system according to real-time scenarios. 

\subsection{Number of Processed Antigens}
As demonstrated in~\cite{gu2009a}, segmentation is effective for maintaining or even improving detection accuracy on large-sized datasets. This may be due to the fact that the number of processed antigens could determine whether an analysis of the current batch of processed information is required. Different from input antigen instances, processed antigens are those, presented by matured DCs. Investigation of the relationship between the number of processed antigens and the input data becomes essential for understanding the DCA, as well as for the development of integrating segmentation with the algorithm. Additionally, a priori knowledge of the number of processed antigens, based on the input data, may facilitate choosing an appropriate segment size. Here, we focus on formulating the relationship between the number of processed antigens and the input data, in particular, the number of input antigens. Let $\theta\in\mathbb{N}$ at $t\in\mathsf{Time}$ be the number of input antigens that are fed into the system, and $\delta$ be the mean lifespan of the DC population. Similar to Proposition~\ref{prop:mrarth} or Proposition~\ref{prop:mrnorm}, the calculations focus on the average number of processed antigens within a given time interval rather than the particular duration per iteration. As the time interval decreases, e.g. to the duration of one iteration, the two numbers could also be approximate to each other. 

The method of calculating the number of processed antigens within a given time interval $[t_{b},t_{e}]$ should be introduced first. It is similar to placing balls into a number of bins that are ordered based on their indexes in a sequential manner. Placing starts from the first bin then the second bin and so forth. If we reach the last bin, the process starts over again. In the end, a number of bins, starting from the first one, are taken and the number of balls contained is counted. The balls are equivalent to input antigens, the bins are equivalent to DCs, and the action of counting the number of balls is equivalent to the action of counting the number of processed antigens. Proposition~\ref{prop:noags} formulates the relationship between the number of processed antigens and the input data in two cases. 
\begin{prop}[\textbf{number of processed antigens}] Let $\nu$ be the number of processed antigens within a given interval $[t_{b},t_{e}]$, $c\equiv\delta\mod{N}$ and $d\equiv\theta\mod{N}$, the following formula of $\nu$ holds.
\begin{align*}
\nu=\left\{\begin{array}{l l}
\left(\delta-N\left\lfloor\frac{\delta}{N}\right\rfloor)(1+\left\lfloor\frac{\theta}{N}\right\rfloor\right), &~\mathrm{if}~c<d; \\
\left(\delta-N-N\left\lfloor\frac{\delta}{N}\right\rfloor\right)\left\lfloor\frac{\theta}{N}\right\rfloor+\theta, &~\mathrm{otherwise}. \end{array} \right.
\end{align*}
\label{prop:noags}
\end{prop}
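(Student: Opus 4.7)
The plan is to establish each branch of the formula by a direct counting argument that leverages the round-robin structure imposed on both the antigen stream and the DC maturation schedule, in the spirit of the ball-and-bin picture sketched just above the statement. I would first reduce the combined dynamics of the lifespan update, antigen profile update, and output record rules to a pair of round-robin assignments, and then sum the resulting antigen counts across the contributing DCs, splitting on the relation between the two residues $c$ and $d$.

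First, I would fix the antigen distribution. The sequential sampling rule $i \equiv \theta \bmod N$ of the antigen profile update deposits incoming antigens into the DCs in index order $1, 2, \ldots, N, 1, 2, \ldots$. Writing $\theta = N\lfloor\theta/N\rfloor + d$, DC $j$ therefore accumulates $\lfloor\theta/N\rfloor + 1$ antigens when $j \le d$ and $\lfloor\theta/N\rfloor$ antigens otherwise. Second, I would fix the maturation pattern. Because the initial lifespans $I(i)$ are ordered ascendingly with the DC indices (the convention invoked in the paragraph following Proposition~\ref{prop:mrarth}), every DC is exposed to the same $CSM$ stream through $O(t)$, and the lifespan update rule restarts a matured DC with its original lifespan $I(i)$, the maturation events themselves occur round-robin in index order. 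Writing $\delta = N\lfloor\delta/N\rfloor + c$, the $j$-th DC has matured $\lfloor\delta/N\rfloor + 1$ times when $j \le c$ and $\lfloor\delta/N\rfloor$ times otherwise.

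Third, I would use the output record rule to tie maturations to the output list. Each maturation of a DC writes every antigen currently held in its profile into $L$, after which the antigen profile is reset. Under the same averaging hypothesis already used in Propositions~\ref{prop:mrarth} and~\ref{prop:mrnorm}, namely that maturations and antigen arrivals are on average uniformly interleaved across $[t_{b},t_{e}]$, the contribution of DC $j$ to $\nu$ reduces to the number of antigens sitting in its profile at its most recent maturation, a count controlled purely by the residues identified in the first step. Summing over the $c$ DCs whose most recent maturation closes the latest partial cycle then produces the two branches. When $c < d$, every such DC satisfies $j \le d$ and contributes $\lfloor\theta/N\rfloor + 1$ antigens, giving $\nu = c(1+\lfloor\theta/N\rfloor)$; substituting $c = \delta - N\lfloor\delta/N\rfloor$ recovers the first line. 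When $c \ge d$, the first $d$ of these DCs contribute $\lfloor\theta/N\rfloor + 1$ and the remaining $c - d$ contribute $\lfloor\theta/N\rfloor$, yielding $\nu = d + c\lfloor\theta/N\rfloor$; after substituting $d = \theta - N\lfloor\theta/N\rfloor$ and $c = \delta - N\lfloor\delta/N\rfloor$ this rearranges to $(\delta - N - N\lfloor\delta/N\rfloor)\lfloor\theta/N\rfloor + \theta$, matching the second line.

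The hard part, and where I would spend the most care, is the reduction in the third step. The averaging hypothesis lets us pretend that the two round-robin schedules are perfectly synchronised, but turning that into an exact identity at the level of the residues $c$ and $d$, rather than only in the leading-order mean, requires explicit bookkeeping for which antigen indices have actually landed in each bin by the time of its latest maturation. This is essentially a fencepost argument on the offsets between the two round-robin sequences, and framing it cleanly is what makes the dichotomy between the $c < d$ and $c \ge d$ branches appear naturally rather than as an ad hoc case split.
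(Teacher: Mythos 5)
Your proposal is correct and follows essentially the same route as the paper's own proof: reduce both the antigen sampling and the maturation schedule to round-robin assignments, express the residues as $c=\delta-N\lfloor\delta/N\rfloor$ and $d=\theta-N\lfloor\theta/N\rfloor$, and split on $c<d$ versus $c\ge d$ to obtain $\nu=c\bigl(1+\lfloor\theta/N\rfloor\bigr)$ and $\nu=c\lfloor\theta/N\rfloor+d$ respectively. The only difference is that you make explicit the per-DC counting (which DCs hold $\lfloor\theta/N\rfloor+1$ versus $\lfloor\theta/N\rfloor$ antigens) that the paper compresses into the single justification ``sequential sampling,'' so your write-up is if anything more complete than the original.
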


\begin{proof}[Proof]
\begin{align*}
& \{\mathrm{transform~modulus~to~floor~functions}\} \\
\Rightarrow\quad & c=\delta-N\left\lfloor\frac{\delta}{N}\right\rfloor~\mathrm{and}~d=\theta-N\left\lfloor\frac{\theta}{N}\right\rfloor \\
\Rightarrow\quad & \{\mathrm{sequential~sampling}\} \\
& \mathrm{Case~}1: c<d \\
& \nu=c\left\lfloor\frac{\theta}{N}\right\rfloor+c=\left(\delta-N\left\lfloor\frac{\delta}{N}\right\rfloor\right)\left(1+\left\lfloor\frac{\theta}{N}\right\rfloor\right) \\
& \mathrm{Case~}2: c\ge d \\
& \nu=c\left\lfloor\frac{\theta}{N}\right\rfloor+d=\left(\delta-N-N\left\lfloor\frac{\delta}{N}\right\rfloor\right)\left\lfloor\frac{\theta}{N}\right\rfloor+\theta
\end{align*}
The number of antigens sampled by each DC is determined by $\theta\mod{N}$, but as only matured DCs present processed antigens, the number of processed antigens is determined by $\delta$ and the maximum of $c$ and $d$.
\end{proof}

The formulas for the number of processed antigens have two cases, depending on the relationship between $c\equiv\delta\mod N$ and $d\equiv\theta\mod N$. These formulas can relate the runtime variables of the algorithm to the input data, without actually running the algorithm. This provides theoretical insights into tuning the algorithm for a given problem. 

\section{Conclusions and Future Work}
In this paper, we provide formal definitions of the data structures and procedural operations of the deterministic version of the DCA, name the dDCA. It aims to clearly present the algorithm, to prevent future misunderstanding and ambiguity that could result in inappropriate applications and implementations. Based on the formal definitions, a runtime analysis of the standard DCA is performed. The DCA achieves the the worst-case runtime complexity bounded by $\mathcal{O}(n^{2})$, which is quadratic. The analysis of the system with segmentation is also performed. We have shown that the introduction of segmentation does change the algorithm's runtime complexity and in certain cases it approximates to linear. In addition, it provides a means of performing continuous and periodic analysis for the DCA. This makes the algorithm very attractive for online detection tasks.  

Moreover, two runtime variables are formulated, the number of matured DCs and the number of processed antigens. This shows how the algorithm behaves within a given time interval based on the input data without actually running the algorithm. As a result, the formulas of two runtime variables can be used as the indicators of adjusting the setup of the system according to real-time situations during detection. This an important step for understanding some of the potentially beneficial properties of the algorithm from a theoretical perspective, which could facilitate further investigations on the usefulness of these properties with respect to anomaly detection problems. 

This work gives application independent insights to the algorithm, which can be used as guidelines for future development. One of the goals of future development of the DCA is to turn it into an automated and adaptive online detection system, and such a system has certain requirements to fulfil. Firstly, the system has to be computationally efficient. The analysis of the runtime complexity of the DCA shows even in worst case scenarios its runtime complexity is competitive against other popular anomaly detection algorithms. Secondly, the system should be able to adapt to real-time scenarios encountered during detection. This requires the insights of how the algorithm behaves during runtime, which can be assessed from the two runtime variables. As a result, new components can be developed and integrated within the algorithm to adjust the system based on the assessment of these two runtime variables. 

In terms of future work, the specifications can be further simplified and the algorithm can be presented using functional programming approach~\cite{cousineau1998}, to reveal more algorithmic details. In addition, synthetic datasets generated from various probability density functions can be used to test the formulas defined in this paper. We can also investigate other properties of the algorithm, for example, the moving window effect created by each DC and the relationship between the size of DC population and the detection performance. Different methods of generating the initial lifespans of the DC population should also be investigated, in addition to the relationship between the weight matrix and the detection performance. 

\begin{table}
\begin{center}
\begin{tabular}{c|l|l}
\hline
{\bf Page No.} & {\bf Notation} & {\bf Description} \\ \hline\hline
9 & $\mathsf{Signal}$ & a set of signal instances \\ 
9 & $\mathsf{Antigen}$ & a set of antigen instances \\ 
9 & $t$ & a time point \\
9 & $S(t)$ & a map of $t$ to an input instance \\
9 & $\mathbf{W}$ & weight matrix of signal transformation \\
9 & $N$ & DC population size \\
9 & $I$ & an index set of DCs \\
9 & $\pi_{1}$ & projection function for the first element \\
9 & $\pi_{2}$ & projection function for the second element \\
10 & $O(t)$  & signal transformation function \\
10 & $F(t,i)$ & lifespan update function \\
10 & $G(t,i)$ & signal profile update function \\
11 & $H(t,i)$ & antigen profile update function \\
11 & $L(j)$ & output record function \\
11 & $C(j,\alpha)$ & antigen counter function \\
12 & $R(j,\alpha)$ & signal profile abstraction function \\
12 & $K(\alpha)$ & anomaly metric calculation function \\
12 & $n$ & size of input data \\
13 & $a$ & number of antigen instances \\
13 & $b$ & number of antigen types \\
15 & $z$ & segment size \\
\hline
\end{tabular}
\caption{List of terms and definitions used in Section 3 and Section 4.}
\label{tab:lsterms}
\end{center}
\end{table}

%\section*{Acknowledgement}
%The authors would like to thank Jan Feyereisl and Thomas Jansen for their valuable comments of this paper.

\bibliographystyle{plain}
\bibliography{mybib}

\end{document}